\pgfplotsset{compat=newest}
\newtheorem{definition}{Definition}
\newtheorem{example}{Example}
\newtheorem{theorem}{Theorem}
\begin{document}

\newcommand{\SyncBT}{SyncBT}
\newcommand{\nogoodm}{{\tt nogood}}
\newcommand{\okm}{{\tt ok?}}
\newcommand{\addlinkm}{{\tt addlink}}
\newcommand{\agentview}{{\tt agent view}}
\newcommand{\ABT}{ABT}
\newcommand{\StudentAlice}{Student~{\ensuremath{A_2}}}
\newcommand{\StudentBob}{Student~{\ensuremath{A_3}}}
\newcommand{\ProfessorP}{Professor~{\ensuremath{A_1}}}
\newcommand{\Alice}{{\ensuremath{A_2}}}
\newcommand{\Bob}{{\ensuremath{A_3}}}
\newcommand{\Professor}{{\ensuremath{A_1}}}
\newcommand{\risk}{\ensuremath{{\mbox{\,\it f}}utilityRisk}}
\newcommand{\ABTU}{ABTU}
\newcommand{\SyncBTU}{SyncBTU}
\newcommand{\estimatedCost}{$estimatedCost$}
\newcommand{\costNonTerminal}{\ensuremath{costNonTerminal}}
\newcommand{\costTerminal}{\ensuremath{costRound}}
\newcommand{\probabilityD}{\ensuremath{probD}}
\def\Aa{Professor}
\def\Aaa{{Professor\ensuremath{A_1}}}
\def\Aab{{Student\ensuremath{A_2}}}
\def\Aac{{Student\ensuremath{A_3}}}
\def\Ab{Student A2}
\def\Ac{Student A3}
\def\M{M}

\title{DisCSPs with Privacy Recast as Planning Problems for Utility-based Agents}

\author{
\IEEEauthorblockN{Julien Savaux, \\ Julien Vion, \\ Sylvain Piechowiak, 
\\ and Ren\'{e} Mandiau}
\IEEEauthorblockA{University of Valenciennes}
\and
\IEEEauthorblockN{Toshihiro Matsui}
\IEEEauthorblockA{Nagoya Institute of Technlogy}
\\[0mm]
\IEEEauthorblockN{Katsutoshi Hirayama}
\IEEEauthorblockA{Kobe University}
\and
\IEEEauthorblockN{Makoto Yokoo}
\IEEEauthorblockA{Kyushu University}
\\[0mm]
\IEEEauthorblockN{Shakre Elmane, Marius Silaghi}
\IEEEauthorblockA{Florida Institute of Technology}

}

\maketitle

\begin{abstract}
Privacy has traditionally been a major motivation for decentralized
problem solving. However, even though several metrics have been 
proposed to quantify it, none of them is easily integrated with common solvers.
Constraint programming is a fundamental paradigm used to approach various families of problems.
We introduce Utilitarian Distributed Constraint Satisfaction Problems (UDisCSP) where the utility
of each state is estimated as the difference between the
the expected rewards for agreements on assignments for shared variables, 
and the expected cost of privacy loss. 

Therefore, a traditional DisCSP with privacy 
requirements is viewed as a planning problem. 
The actions available to agents are: communication and local inference.
Common decentralized 
solvers are evaluated here from the point of view of their interpretation 
as greedy planners. 
Further, we investigate some simple extensions where 
these solvers start taking into account the utility function.  
In these extensions we assume that the planning problem is further 
restricting the set of communication actions to only the communication primitives present
in the corresponding solver protocols.
The solvers obtained for the new type of problems propose the action (communication/inference)
to be performed in each situation, defining thereby the policy.
\end{abstract}

\IEEEpeerreviewmaketitle

\section{Introduction}

In Distributed Constraint Satisfaction Problems (DisCSP), agents have
to find values to a set of shared variables while respecting given
constraints (frequently assumed to have unspecified privacy implications). 
To find such assignments, agents exchange messages until a solution is 
found or until some agent detects that there is no solution to the 
problem. Thus, commonly agents reveal information during the solution 
search process, causing privacy to be a major concern in 
DisCSPs~\cite{yokoo1998distributed}.

The artificial intelligence assumption is that utility-based agents are able to associate
each state with a utility value~\cite{russell2010}. As such each action is associated with the difference between initial and final utilities. 
If users are concerned about their privacy, then such a user can 
associate a utility value with the privacy of each piece of information 
in the definition of their local problem. Since the users are interested in 
solving the problem, they must be also able to quantify the utility
each of them draws from finding the solution. Here we approach the
problem by assuming that privacy has a utility that can be aggregated
with the utility value of solving the problem. We evaluate how much
privacy is lost by the agents during the problem solving process, by
the total utility of each information that was revealed. The
availability of a value from the domain of a variable of the DisCSP in
the presence of the constraints of an agent, is the kind of
information that the agents want to keep private. For example, proposing
an assignment with that value has a cost quantifying the desire of the
agent to maintain its feasibility private. In traditional algorithms,
agents participate in the search process until an agreement is
found. We investigate the case where, being utility-driven, an agent
may stop its participation if the utility of the privacy expected to
be lost overcomes the reward for finding a solution of the
problem. Simple extensions to basic algorithms are investigated to
exploit the utilitarian model of privacy.

We then evaluate and compare synchronous and asynchronous algorithms
according to how well they preserve privacy. To do so, we generate
distributed meeting scheduling problems, as described
in~\cite{maheswaran2004taking,gershman2008scheduling}. In these
problems, all agents own one variable, corresponding to the meeting to
schedule, and the domain is the same for all variables. The
constraints consist in a global constraint that requires all the
variables to be equal, and also a unary constraint for each agent.

In the next section we discuss previous research concerning privacy
for DisCSPs. Further we formally define the concepts involved in
UDisCSPs. In Section~\ref{Algorithms} we introduce some extensions to
common DisCSP solvers that exploit the utilitarian model for
privacy. After a discussion on theoretical implications, we
present our experimental results in Section~\ref{Experimentations}.
We conclude in Section~\ref{Conclusions}.

\section{Background}	
\subsection{Backtracking Algorithms}
\subsubsection{Synchronous Backtracking}
The baseline algorithm for DisCSPs is the Synchronous Backtracking (\SyncBT{}), 
as presented
in~\cite{yokoo1992distributed}. \SyncBT{} is a simple distribution of
the standard backtracking algorithm.  The agents start by determining
a hierarchy between them. The higher priority agent then sends a
satisfying assignment of its variable to the next agent with an \okm{}
message. The recipient adds to it an
instantiation of its own variable while respecting its constraints,
and continues likewise. If an agent is unable to find an instantiation
compatible with the current partial assignment it has received, the
agent sends a \nogoodm{} message to the previous agent in the
hierarchy.  The process repeats until a complete solution is built, or
until the whole search space is explored.  The main efficiency concern
is that, since the messages are being sent sequentially, it does not
take advantage of possible parallelism.

\subsubsection{Asynchronous Backtracking}	
Asynchronous Backtracking (\ABT{})~\cite{yokoo1992distributed}, 
allows agents to run concurrently.
Each agent finds an assignment of its variable and communicates it to
the others agents,
having constraints involving this variable.
Agents then wait for incoming
messages. They receive an \okm{} message containing an assignment from
a related higher priority agent, at the beginning of the resolution
and also each time such an agent changes its assignment to avoid
constraint violation. 

An agent eventually receives values proposed by the agents it is
connected to by incoming links.  These values form a context
called \agentview{}. When an agent receives an \okm{} message, it
integrates the received assignment into its \agentview{} and checks
whether its own solution is consistent with it. If it is not the case,
the agent's assignment is changed.  The negation of a subset of
an \agentview{} preventing an agent from finding an assignment that
does not violate any of its constraints, is called a nogood.  If an
agent infers a nogood from its constraints and its \agentview{}, the
assignment of the lowest priority agent involved in the nogood has to
be changed. A \nogoodm{} message communicates to that agent the nogood,
which is treated by its recipient as a new constraint and can cause it
to change its assignment and generate corresponding \okm{}, \addlinkm() or
\nogoodm{} messages.

\subsection{Privacy}
In air traffic control~\cite{international2005worldwide}, each airport
has to allocate each takeoff and landing slots to the different
flights.  Even if airlines need combinations of slots to operate
sequences of flights, slots are currently allocated individually. Such
coordinated decisions are often impossible because of the need to keep
constraints private~\cite{faltings2008privacy}.
Thus, privacy has been an important aspect for DisCSP solving algorithms.
Recently, privacy preserving algorithms have also been developed for solving 
distributed constraint optimization problems \cite{grinshpoun2014privacy} \cite{tassa2015max}.

In existing works, there are two main approaches to enforce
privacy. The first one uses cryptographic techniques. The main problem
of these methods is that cryptographic protocols can be much slower,
which often makes them impractical~\cite{hirt2000efficient}.  The
second approach is based on using different search strategies to
minimize privacy loss, as defined by certain privacy metrics. In this
section we exemplify methods using these approaches.

\subsubsection{Sample Cryptographic Technique}	
As an example of cryptographic technique, the approach described
in~\cite{yokoo2002secure}, achieves a high level of privacy using
encryption, giving more importance to privacy than to the efficiency
of the resolution. It consists of using randomizable public key
encryption scheme. In this algorithm, three servers (value selector,
search controller and decryptor) receive encrypted information from
agents and cooperate to find an encrypted solution. Relevant parts of
the solution are then sent to each agent. This method guarantees that
no information is leaked to other agents. It also guarantees that,
thanks to the renaming of values by permutation, servers cannot know
the actual values they are dealing with. 
We now investigate methods that do not use cryptography. 

\subsubsection{Distributed Private Constraint Satisfaction Problems}	
A framework called Distributed Private Constraint Satisfaction
Problems (DisPrivCSPs), is introduced
in~\cite{freuder2001privacy,silaghi2002comparison},
modeling the privacy loss for individual revelations. It also models the
effect of the privacy loss by assuming that agents may abandon when
the incremental privacy loss overcomes the expected gains from finding
a solution.
Each agent pays a cost if the feasibility of some of its
tuple is determined by other agents. 
The reward for solving the problem is given
as a constant. 
Those concepts were so far used for evaluating qualitatively existing
algorithms, but were not integrated as heuristics in the search
process.   Privacy
and the cost/utility usual optimization criteria of Distributed Constraint
Optimization Problems
are merged in~\cite{doshi2008distributed} into a unique criterion.

\subsubsection{Valuations of Possible States}
The Valuations of Possible States (VPS)
framework~\cite{maheswaran2005valuations,maheswaran2006privacy,greenstadt2006analysis}
measures privacy loss by the extent to which the possible states of
other agents are reduced~\cite{freuder02}.
Privacy is interpreted as a valuation on the other agents'
estimates about the possible states that one lives in. 
During the search process, agents propose their values in an order of
decreasing preference. At the end of the search process, the
difference between the presupposed order of preferences and the real
one observed during search determines the privacy loss: the
greater the difference, the more privacy has been lost.

\subsubsection{Partially Known Constraints}	
The Partially Known Constraints (PKC) model~\cite{brito2009distributed},
uses entropy, as defined in information theory, to quantify privacy
and privacy loss. 
In this method, two variables $x_1$ and $x_2$ owned by two
different agents may share a constraint. However, not all the forbidden
couples ($x_1,x_2$) are known by both agents. Each agents only knows
a subset of the constraints. 
During the search process, assignment privacy is leaked
through \okm{} and \nogoodm{} messages, like in standard
algorithms. This problem is solved by not sending the value that is
assigned to a variable in a \okm{} message, but the set of values compatible
with this assignment. For \nogoodm{} messages, 
rather than sending the actual
assignments,
an identifier is used to specify the state of the resolution and is used to
check if some assignments are obsolete or not.

\section{Concepts}
The Distributed Constraint Satisfaction Problem (DisCSP) is the
formalism commonly used to model constraint problems distributed
between several agents.
It is represented by a quadruplet $\langle{A,V,D,C}\rangle$ where$:$

\begin{itemize}
	\item $A$: a set of agents.
	\item $V$: a set of variables, each one being owned by a distinct agent.
	\item $D$: a set of domains, each of them defining available values for the corresponding variable.
	\item $C$: 
	a set of constraints, each constraint being a relation imposed between two
	variables (i.e, $x_{1}=x_{2}$).
	
\end{itemize}

An agent that reveals an assignment to another agent, incurs a cost. 
Once the information is revealed, we consider that it becomes public, meaning that revealing it to yet another agent will not degrade its privacy.

\begin{example} \label{ex:1}
	Suppose a meeting scheduling problem between a professor and two 
	students. They all consider to agree on a time to meet on a given day,
	to choose between $8am$, $10am$ and $2pm$. For simplicity, in the next sections, we will refer to these possible values by their identifier: 1, 2, and 3. 
	The \ProfessorP{} is unavailable
	at $2pm$, \StudentAlice{} is unavailable at $10am$, and \StudentBob{} 
	is unavailable at $8am$.
	
	There can exist various reasons for privacy. For example,
	\StudentAlice{} does not want to reveal the fact that
	he is busy at $10am$ (because he secretly took a second job at that time). 
	The value \StudentAlice{} associates with not revealing the $10am$ unavailability is the salary from the second job ($\$2.000$). 
	The utility of finding an agreement for each student is the stipend for their studies ($\$5.000$).
	This is an example of privacy for absent values or constraint tuples.
	
	Further \StudentBob{} had recently boasted to \StudentAlice{} that at
	$8am$ he interviews for a job, and he would rather
	pay $\$1.000$ than to reveal that he is not.
	This is an example of privacy for feasible values of constraint tuples.
	
	Thus, corresponding agents associate a cost of $1$ to the revelation of their availability at 
	$8am$, equals to $2$ for the one at $10am$, and equals to $4$ for the one at $2pm$. 
	The reward from finding a solution is $5$.
	
\end{example}

\paragraph{DisCSP} The DisCSP framework models this problem with:	
\begin{itemize}
	\item
	$A=\{\Professor, \Alice, \Bob\}$
	\item
	$V=\{x_1,x_2,x_3\}$
	\item
	$D=$ $\{ \{1, 2, 3\}, \{1, 2, 3\}$, $\{1, 2, 3\} \}$
	\item
	$C= { \{x_1=x_2=x_3}, x_1\not=3, x_2\not=2, x_3\not=1  \}$
\end{itemize}
As it can be observed, DisCSPs cannot model the details regarding privacy
considerations. Now we will show how existing extensions model the
remaining details.

With DisPrivCSPs the additional parameters are $P$, to
specify the privacy coefficient of each value, and $R$, to specify
the rewards of each coefficient.
\begin{itemize}
	\item $P=\{P_\Professor{},P_\Alice{},P_\Bob{} \}=\{ (1,2,4),(1,2,4),(1,2,4)\}  $
	\item $R=\{R_\Professor{},R_\Alice{},R_\Bob{} \}=\{(5),(5),(5)\}  $
\end{itemize}

As we see, this framework successfully models all the information
described in the initial problem and also measures the privacy loss
for each agent. However, it was not yet investigated what is the
impact of the interruptions when privacy loss exceeds the revenue
threshold, or how agents could use these information to modify their
behavior during the search process to preserve more privacy.

\paragraph{VPS} For this problem, with 
the VPS framework, the 3 participants have to suppose an order of
preference between all different possible values for each other
agent. As agents initially do not know anything about others agents
but the variable they share a constraint with, they have to suppose an
equal distribution of all possible values for all other agents, meaning
that they do not expect the feasibility of any value to be less secret, and so proposed
first.  In this direction one needs to extend VPS to be able to also
model the kind of privacy introduced in this example.

\paragraph{PKC}
With PKC, the individual unavailabilities are only known by the corresponding participant. 
Only the junction of information known by the two agents over a given constraint can reconstruct the whole problem.
\begin{itemize}
	\item
	$A=\{\Professor, \Alice, \Bob\}$
	\item
	$V=\{x_1,x_2,x_3\}$
	\item
	$D= \{ \{1, 2, 3\}, \{1, 2, 3\}, \{1, 2, 3\} \}$
	\item 
	$C= \{ \{x_1=x_2=x_3, x_1\not=3\},$ \\
	\hspace*{0.92cm}$\{x_1=x_2=x_3, x_2\not=2  \},$ \\ 
	\hspace*{0.92cm}$\{x_1=x_2=x_3, x_3\not=1\} \}$
\end{itemize}

Extensions of PKC can also be proposed to model our example by adding
extra parameters for specifying the quantitative information about
privacy, as shown below.  Next we introduce a framework that can both
specify the quantitative input details, and can help agents in their
search process.

\paragraph{UDisCSP}
While some previously described frameworks do
model the details of our example, it has until now been an open
question as to how they can be dynamically used by algorithms in the
solution search process.

We propose to recast a DisCSP as a planning problem. It can be noticed
that the rewards and costs in our problem bear similarities with the
utilities and rewards commonly manipulated by planning algorithms  \cite{koenig2002heuristic}. As
such, we propose to define a framework which, while potentially being
equivalent in expressing power to existing DisCSP extensions, would
nevertheless explicitly specify the elements of the corresponding
family of planning problems.

We introduce the Utilitarian Distributed Constraint Satisfaction
Problem (UDisCSP). Unlike previous DisCSP frameworks, besides results,
we are also interested in the solution process. A policy is a function that associates
each state of an agent with an action that it should perform~\cite{russell2010}.

\theoremstyle{definition}

We define an {\em agreement} as a set of assignments for all the
variables with values from their domain, such that all the constraints
are satisfied. 

\begin{definition}
	A UDisCSP is formally defined as a tuple $\langle A,V,D,C,U,R \rangle$ where:
	\begin{itemize}
		\item
		$A=\langle A_1,...,A_n\rangle$ is a vector of $n$ agents
		\item
		$V=\langle x_1,...,x_n \rangle$ is a vector of $n$ variables.  
		Each agent $A_i$ controls 
		the variable $x_i$. 
		\item
		$D=\langle D_1,...,D_n\rangle$
		where $D_i$ is the domain for the variable $x_i$, known only to $A_i$, and a subset of $\{1,...,d\}$. 
		\item
		$C=\{c_1,...,c_m\}$ is a set of interagent constraints.
		\item
		$U=\{u_{1,1},...,u_{n,d}\}$ is a matrix of costs where
		$u_{i,j}$ is the cost of agent $A_i$ for revealing whether
		$j\in{}D_{i}$. 
		\item
		$R=\langle r_1,...,r_n\rangle$ is a vector of rewards,
		where $r_i$ is the reward agent $A_i$ receives if an agreement is found. 
	\end{itemize}
	The
	{\em state} of agent $A_i$ includes the subset of $D_i$ that it has
	revealed, as well as the achievement of an agreement. 
	The problem is to
	define a set of communication actions and a
	policy for each agent such that their utility is maximized.
	
\end{definition}

Note that the solution of a UDisCSP does not necessarily include
an agreement. In principle the set of available actions for agents
consist in any communication operator, as well as any local inference
computation.

\begin{example}
	The DisCSP in the Example~\ref{ex:1} is extended to UDisCSP 
	by specifying 
	the 
	additional parameters $U, R$: \\
	$U=\{u_{1,1}=1,u_{1,2}=2,u_{1,3}=4, \\
	\hspace*{0.925cm} u_{2,1}=1,u_{2,2}=2,u_{2,3}=4, \\
	\hspace*{0.925cm} u_{3,1}=1,u_{3,2}=2,u_{3,3}=4\}$. \\
	$R=\langle 5,5,5 \rangle$. \\	
\end{example}

The participants are utility-based agents~\cite{russell2010}
and try to reach the optimal state.

\section{Algorithms}\label{Algorithms}
\LinesNumbered
Now we discuss how the basic \ABT{} and \SyncBT{} algorithms are adjusted 
to UDisCSPs. 
The state of an agent includes the \agentview{}.
After each state change, each agent computes the estimated 
utility of the state reached by each possible action, and selects randomly 
one of the actions leading to the a state with the maximum expected utility.

In our algorithms, an information used by agents in their estimation
of expected utilities is the risk of one of their assignments being
rejected.  This risk can be re-evaluated at any moment based on data
recorded during previous runs on problems of similar parameters (e.g,
problem density).

The learning can be online or offline. 
For offline learning one calculates the number of 
messages \okm{} and \nogoodm{} sent during previous executions, called $count$.
It also counts how many messages previously sent lead to the termination
of the algorithm, in variable $terminationCount$. It calculates the risk for a 
solution to not lead to the termination of the algorithm, called $futilityRisk$.
Alternatively one can update
the $count$, $terminationCount$ and  $unsolvedRisk$ dynamically whenever the corresponding events happen.

\begin{equation}
\risk = 1 - \frac{terminationCount}{count}\label{eq:rejected}
\end{equation}

When \okm{} messages are sent, the agent has the choice of which assignment
to propose. When a \nogoodm{} message is scheduled to be sent, agents also
have choices of how to express them. 
Before each \okm{} or \nogoodm{} message, the
agents check which available action leads to the highest expected utility. 
If the highest expected utility is lower than the current one, the agent announces failure. 
The result is used to decide the assignment, nogood, or failure to perform.

We called these modified algorithms \SyncBTU{} and \ABTU{}, respectively. 
The algorithms \SyncBT{}U and \ABT{}U are obtained by 
performing the above mentioned modifications, in the pseudocodes
of \SyncBT{}~\cite{yokoo1992distributed,zivan2003synchronous} and of \ABT{}~\cite{yokoo1998distributed}.

\SyncBT{}U is obtained by restricting the set of communication actions to the
standard communication acts of \SyncBT{}, namely \okm{} and \nogoodm{} messages.
The procedures of a solver like \SyncBT{} define a policy, since they uniquely identify
a set of actions (inferences and communications) to be performed in each state. A state of an agent in \SyncBT{} is defined by an agent-view
and a current assignment of the local variable.
The local inferences in \SyncBT{}U are obtained from the ones of \SyncBT{} by a simple extension
exploiting the utility information available. The criteria in this research was not to guarantee
an optimal policy but to use utility with a minimal change to the original behavior of \SyncBT{} when
reinterpreted as a policy.
In \SyncBT{}U, the state is extended to also contain
a history of revelations of one's values defining an accumulated privacy loss,
and a probability to reach an agreement with each action.
Similar modifications are done to \ABT{} to obtain \ABTU{}: the restricted set 
of communication of \ABTU{} is composed of \okm{}, \addlinkm{} and \nogoodm{}.
The state and local inferences of \ABTU{} are the same as \SyncBTU{}, while also containing the set of nogoods.

For \ABT{}U, there are three procedures of \ABT{} that have to be modified: 
{\tt checkagentview}, {\tt when~nogood}, and {\tt backtrack}.
The new procedure  {\tt checkagentview} is shown in Algorithm~\ref{alg:ABTUcav}
and is obtained by inserting Lines~\ref{ln:abtu9} to 10. 
They test the privacy loss and only continue
as usual if the expected loss is smaller than the expected reward.

For lack of space, we do not include here the modified versions of the other
two procedures of \ABT{}U, since they are obtained in the same way from the
procedures of \ABT{} in~\cite{yokoo1998distributed}, procedure {\tt when~nogood}, $7^{th}$ line, and procedure {\tt backtrack}, $7^{th}$ line.
For \SyncBT{}U, its procedures are obtained from the procedures
of \SyncBT{} in an identical way as for \ABT{}U
and \ABT{}. Since~\cite{yokoo1992distributed} does not provide
pseudocode for \SyncBT{}, we refer the modifications to the pseudocode
presented in~\cite{zivan2003synchronous}, function {\tt assignCPA}, before Line~7, and function {\tt backtrack}, before Line~6.

\begin{algorithm}[h]

		\KwIn{$D$, $agentView$, $futilityRisk$, $reward$}
		\KwOut{}
		\SetKwFunction{algo}{algo}\SetKwFunction{proc}{proc}
		\SetKw{KwWhen}{when}
		\SetKw{KwDo}{do}
		\KwWhen $agentView$ and $currentValue$ inconsistent \KwDo{} \\
		{	\setcounter{AlgoLine}{1}
			\If {no value in $D$ is consistent with $agentView$}
			{backtrack\;}
			\Else{
				select d $\in$ $D$ where $agentView$ and d are consistent\;
				$currentValue$ $=$ d \;
				\If {calculateCost($futilityRisk$, $D$, $1$) $>=$ reward \label{ln:abtu9}}
				{interruptSolving()\;\label{ln:abtu10}}
				\Else 
				{send(ok?,($x_i$;d)) to outgoing links \label{ln:abtu11} } 
			}
		}
		\setcounter{AlgoLine}{0}
	\caption{procedure checkAgentView in ABTU}\label{alg:ABTUcav}

\end{algorithm}

To calculate the estimated utility of pursuing an agreement 
(revealing an alternative assignment), 
the agent considers all different possible scenarios of
the subsets of values that might have to be revealed in the future
based on possible rejections received, together with their probability
(see Algorithm~\ref{alg:CalculateCost}).  The algorithm assumes as
parameters: 
\begin{itemize}
	\item
	the previously calculated \risk{} (see Equation~\ref{eq:rejected}), 
	\item
	the possible values $D$, and 
	\item
	the probability of having to select from $D$.
\end{itemize}
The algorithm then recursively calculates the utility of the next possible
states, and whether the revelation of the current value $v$ leads to the
termination of the algorithm, values stored in variables $\costTerminal{}$ and
$costNonTerminal$. The algorithm returns the estimated cost of privacy
loss for the future possible states currently,
called \estimatedCost{}.

\newcounter{algoline}
\newcommand\Numberline{\refstepcounter{algoline}\nlset{\thealgoline}}
\LinesNumberedHidden
\begin{algorithm}[h]
	\KwIn{\risk{}, $D$, $\probabilityD$}
	\KwOut{$estimatedCost$}
	\Numberline \If {only one value is left in the domain}
	{\Numberline \Return (marginalCost(value) * $\probabilityD$)\;}
	\Else
	{\Numberline $v = first(D)$\;
	\Numberline \costTerminal = calculateCost
	\\ ~~~~ (1-\risk{},
	\{$v$\},
	$\probabilityD$)\;
	\Numberline \costNonTerminal = calculateCost
	\\~~~~ (\risk{},
	$D\setminus\{v\}$,
	\\~~~~ 
	\risk{} * $\probabilityD$)\;
	\Numberline estimatedCost = $\costTerminal + \costNonTerminal $\;
	\Numberline \Return estimatedCost\; 
	\caption{CalculateCost}\label{alg:CalculateCost} 
    } 
\end{algorithm}

\begin{example}
	
	Continuing with Example~\ref{ex:1}, at the beginning of the computation
	agent \Professor{} has to decide for a first action to perform.
	We suppose the $\risk{}$ learned from previous solvings is $0.5$.
	To decide whether it should propose an available value or not, 
	it calculates the corresponding \estimatedCost{} by calling 
	Algorithm~\ref{alg:CalculateCost} with parameters: 
	the learned $\risk{} = 0.5$, 
	the set of possible messages ($D=\{1,2,3\}$) and 
	$\probabilityD=1$. 
	
	For each possible value, this algorithm recursively sums the cost for the 
	two scenarios corresponding to whether the action leads immediately to termination, or not.
	Given privacy costs, the availability of three possible subsets of $D$ may be revealed in this problem: $\{1\}, \{1,2\}$, and $\{1,2,3\}$. The \estimatedCost{} returned is the sum of the costs for all 
	possible sets, weighted by the probability of their feasibility being revealed if an agreement is pursued. 
	At the call, $\costTerminal=u_{1,1}*0.5$. In the next recursion for $costNonTerminal$, we get $\costTerminal=(u_{1,1}+u_{1,2})*0.25$.
	In the last recursion, the algorithm returns $(u_{1,1}+u_{1,2}+u_{1,3})*0.25$.
	The \estimatedCost{} obtained is 
	$u_{1,1}*0.5+(u_{1,1}+u_{1,2})*0.25+(u_{1,1}+u_{1,2}+u_{1,3})*0.25$.
	The expected utility ($reward + \estimatedCost{}=5-3=2$)  of pursuing a solution being positive, 
	the first value is proposed.
	
\end{example}	

	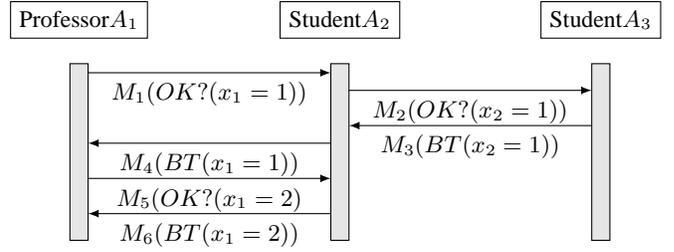
\begin{figure}[h]
		\begin{center}	
			\begin{tikzpicture}[every node/.style={font=\small,}]
			
			\node [matrix, very thin,column sep=0.75cm] (matrix) at (0,0) {
				& \node(0,0) (\Aa) {}; 						&&&&			 		\node(0,0) (\Ab) {}; 					&&&& 					\node(0,0) (\Ac) {}; & \\ 
				& \node(0,0) (\Aa 0) {}; 					&&&&		 			\node(0,0) (\Ab 0) {}; 					&&&&		 			\node(0,0) (\Ac 0) {}; & \\ 
				& \node(0,0) (\Aa 1) {}; 					&&&&					\node(0,0) (\Ab 1) {}; 					&&&& 					\node(0,0) (\Ac 1) {}; & \\ 
				& \node(0,0) (\Aa 2) {}; 			&&\node(0,0) (\M 1) {};&& 		\node(0,0) (\Ab 2) {}; 					&&&&					\node(0,0) (\Ac 2) {}; & \\ 
				& \node(0,0) (\Aa 3) {}; 					&&&&					\node(0,0) (\Ab 3) {};			&&\node(0,0) (\M 2) {};&&		\node(0,0) (\Ac 3) {}; & \\ 
				& \node(0,0) (\Aa 4) {}; 			 		&&&&					\node(0,0) (\Ab 4) {};					&&&&					\node(0,0) (\Ac 4) {}; & \\ 
				& \node(0,0) (\Aa 5) {}; 					&&&&					\node(0,0) (\Ab 5) {};			&&\node(0,0) (\M 3) {};&&		\node(0,0) (\Ac 5) {}; & \\ 
				& \node(0,0) (\Aa 6) {};			&&\node(0,0) (\M 4) {};&& 		\node(0,0) (\Ab 6) {}; 					&&&&					\node(0,0) (\Ac 6) {}; & \\ 
				& \node(0,0) (\Aa 7) {};					&&&&					\node(0,0) (\Ab 7) {}; 					&&&&					\node(0,0) (\Ac 7) {}; & \\ 
				& \node(0,0) (\Aa 8) {};			&&\node(0,0) (\M 5) {};&&		\node(0,0) (\Ab 8) {}; 					&&&&					\node(0,0) (\Ac 8) {}; & \\ 				
				& \node(0,0) (\Aa 9) {};					&&&&					\node(0,0) (\Ab 9) {}; 					&&&&					\node(0,0) (\Ac 9) {}; & \\ 		
				& \node(0,0) (\Aa 10) {};			&&\node(0,0) (\M 6) {};&&		\node(0,0) (\Ab 10) {}; 				&&&&					\node(0,0) (\Ac 10) {}; & \\ 
				& \node(0,0) (\Aa 11) {};					&&&&					\node(0,0) (\Ab 11) {}; 				&&&&					\node(0,0) (\Ac 11) {}; & \\ 		
			};

			\fill 
			(\Aa) node[draw,fill=white] {\Aaa}
			(\Ab) node[draw,fill=white] {\Aab}
			(\Ac) node[draw,fill=white] {\Aac};
			
			
			\filldraw[fill=gray!20]
			(\Aa 2.north west) rectangle (\Aa 11.south east)
			(\Ab 2.north west) rectangle (\Ab 11.south east)
			(\Ac 2.north west) rectangle (\Ac 11.south east);	
			
			\draw [-latex] (\Aa 2) -- (\Ab 2);
			\draw [-latex] (\Ab 3) -- (\Ac 3);
			\draw [-latex] (\Ac 5) -- (\Ab 5);
			\draw [-latex] (\Ab 6) -- (\Aa 6);
			\draw [-latex] (\Aa 8) -- (\Ab 8);
			\draw [-latex] (\Ab 10) --(\Aa 10);			
			\fill
			(\M 1) 
			node[below] {$M_{1} (OK?(x_{1}=1))$}
			(\M 2) 
			node[below] {$M_{2} (OK?(x_{2}=1))$}
			(\M 3) 
			node[below] {$M_{3} (BT(x_{2}=1))$}
			(\M 4) 
			node[below] {$M_{4} (BT(x_{1}=1))$}
			(\M 5) 
			node[below] {$M_{5} (OK?(x_{1}=2)$}
			(\M 6) 
			node[below] {$M_{6} (BT(x_{1}=2))$};	
			
			\end{tikzpicture}
			\caption{Interactions between agents during \SyncBT{}}\label{fig:SyncBT}
		\end{center}
	\end{figure}
	
	\begin{figure}[h]
		\begin{center} 
			\begin{tikzpicture}[every node/.style={font=\small,}]
			\node [matrix, very thin,column sep=0.65cm] (matrix) at (0,0) {
				& \node(0,0) (\Aa) {}; 						&&&&			 		\node(0,0) (\Ab) {}; 					&&&& 					\node(0,0) (\Ac) {}; & \\ 		
				& \node(0,0) (\Aa 0) {}; 					&&&&		 			\node(0,0) (\Ab 0) {}; 					&&&&		 			\node(0,0) (\Ac 0) {}; & \\ 
				& \node(0,0) (\Aa 1) {}; 					&&&&		 			\node(0,0) (\Ab 1) {}; 					&&&&		 			\node(0,0) (\Ac 1) {}; & \\ 
				& \node(0,0) (\Aa 2) {}; 			&&\node(0,0) (\M 1) {};&& 		\node(0,0) (\Ab 2) {}; 					&&&& 					\node(0,0) (\Ac 2) {}; & \\ 
				& \node(0,0) (\Aa 3) {}; 					&&&&		 			\node(0,0) (\Ab 3) {}; 			&&\node(0,0) (\M 2) {};&&		\node(0,0) (\Ac 3) {}; & \\ 
				& \node(0,0) (\Aa 4) {}; 					&&&&					\node(0,0) (\Ab 4) {};	 				&&&&					\node(0,0) (\Ac 4) {}; & \\ 
				& \node(0,0) (\Aa 5) {}; 					&&&&		 			\node(0,0) (\M 3) {}; 					&&&&		 			\node(0,0) (\Ac 5) {}; & \\ 
				& \node(0,0) (\Aa 6) {}; 					&&&&					\node(0,0) (\Ab 6) {};					&&&&					\node(0,0) (\Ac 6) {}; & \\ 
				& \node(0,0) (\Aa 7) {}; 					&&&&		 			\node(0,0) (\Ab 7) {}; 			&&\node(0,0) (\M 4) {};&&		\node(0,0) (\Ac 7) {}; & \\ 
				& \node(0,0) (\Aa 8) {}; 		&&\node(0,0) (\M 5) {};&&			\node(0,0) (\Ab 8) {};					&&&&					\node(0,0) (\Ac 8) {}; & \\ 
				& \node(0,0) (\Aa 9) {}; 					&&&&					\node(0,0) (\Ab 9) {}; 			&&\node(0,0) (\M 6) {};&&		\node(0,0) (\Ac 9) {}; & \\ 
				& \node(0,0) (\Aa 10) {}; 		&&\node(0,0) (\M 7) {};&&			\node(0,0) (\Ab 10) {};					&&&&					\node(0,0) (\Ac 10) {}; & \\ 
				& \node(0,0) (\Aa 11) {}; 				 	&&&&					\node(0,0) (\Ab 11) {}; 				&&&&					\node(0,0) (\Ac 11) {}; & \\ 
				& \node(0,0) (\Aa 12) {};					&&&&					\node(0,0) (\M 8) {}; 					&&&&					\node(0,0) (\Ac 12) {}; & \\ 
				& \node(0,0) (\Aa 13) {}; 					&&&&					\node(0,0) (\Ab 13) {}; 				&&&&		 			\node(0,0) (\Ac 13) {}; & \\ 
				& \node(0,0) (\Aa 14) {};		&&\node(0,0) (\M 9) {};&&			\node(0,0) (\Ab 14) {}; 				&&&&					\node(0,0) (\Ac 14) {}; & \\ 
				& \node(0,0) (\Aa 15) {}; 					&&&&					\node(0,0) (\Ab 15) {}; 				&&&&		 			\node(0,0) (\Ac 15) {}; & \\ 
			};
			
			\fill 
			(\Aa) node[draw,fill=white] {\Aaa}
			(\Ab) node[draw,fill=white] {\Aab}
			(\Ac) node[draw,fill=white] {\Aac};
			
			
			\filldraw[fill=gray!20]
			(\Aa 2.north west) rectangle (\Aa 15.south east)
			(\Ab 2.north west) rectangle (\Ab 15.south east)
			(\Ac 2.north west) rectangle (\Ac 15.south east);	
			
			\draw [-latex] (\Aa 2) -- (\Ab 2);
			\draw [-latex] (\Ab 3) -- (\Ac 3);
			\draw [-latex] (\Aa 5) -- (\Ac 5);
			\draw [-latex] (\Ac 7) -- (\Ab 7);
			\draw [-latex] (\Ab 8) -- (\Aa 8);
			\draw [-latex] (\Ab 9) -- (\Ac 9);
			\draw [-latex] (\Aa 10) -- (\Ab 10);
			\draw [-latex] (\Aa 12) -- (\Ac 12);	
			\draw [-latex] (\Ab 14) -- (\Aa 14);	
			
			\fill
			(\M 1) 
			node[below] {$M_{1} (OK?(x_{1}=1))$}
			(\M 2) 
			node[below] {$M_{2} (OK?(x_{2}=1))$}
			(\M 3) 
			node[below] {$M_{3} (OK?(x_{1}=1))$}
			(\M 4) 
			node[below] {$M_{4} (BT(x_{2}=1))$}
			(\M 5) 
			node[below] {$M_{5} (BT(x_{1}=1))$}
			(\M 6) 
			node[below] {$M_{6} (OK?(x_{2}=3))$}
			(\M 7) 
			node[below] {$M_{7} (OK?(x_{1}=2))$}
			(\M 8) 
			node[below] {$M_{8} (OK?(x_{1}=2))$}
			(\M 9) 
			node[below] {$M_{9} (BT(x_{1}=2))$};		
			
			\end{tikzpicture}
			\caption{Interactions between agents in \ABT{}}\label{fig:ABT}
		\end{center}
	\end{figure}
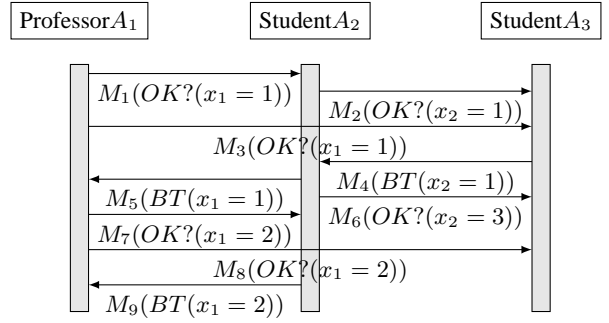

Next is an illustrative example
of other \ABTU{} operations.
\begin{example}	
	With the original \SyncBT{} and \ABT{}, possible obtained traces are depicted in 
	Figure~\ref{fig:SyncBT} and Figure~\ref{fig:ABT}, respectively.
	In Figure~\ref{fig:ABT}, we see that \StudentAlice{} proposes $x_2=1$ in 
	message $M_2$ and $x_2=3$ in message $M_6$.
	In this case, the privacy loss for \StudentAlice{} is $u_{2,1}+u_{2,3}=-1-4=-5$. 
	
	However, with \ABT{}U, we do not only use the actual utility of the
	next assignment to be revealed, but estimate privacy loss using 
	Algorithm~\ref{alg:CalculateCost}. After \StudentAlice{} has already sent $x_{2}=1$ with
	$M_2$, it considers sending $x_{2}=3$ with
	$M_6$. This decision making process is depicted in Figure~\ref{fig:inference}. 
	If the next value, $2pm$, is
	accepted, \StudentAlice{} will reach the final state while having
	revealed $x_2=1$ and $x_2=3$, for a total privacy cost of $u_{2,1} + u_{2,3} = -1 -4= -5$. If it is not,
	the unavailability of the last value $x_2=2$ will have to be revealed
	to continue the search process, leading to the revelation of all its
	assignments for a total cost of $-7$. Since both these scenarios have a probability
	of 50\% to occur, the \estimatedCost{} equals
	$((-5-7)/2)=-6$. The utility ($reward + \estimatedCost{}$) being equal 
	to $5-6=-1$, \StudentAlice{}
	has no interest in revealing $x_{2}=3$ and interrupts the solving. Its
	final privacy loss is only $u_{2,1}=-2$. The utility of the final state
	reached by \StudentAlice{} being $-2$ with \ABTU{}, and $-4$ with
	\ABT{}, \ABTU{} preserves more privacy than \ABT{} in this problem. 
\end{example}
	\begin{figure}[h]	
			\centering		
			\scalebox{0.9}{
		\tikzstyle{level 1}=[level distance=2cm, sibling distance=2cm]
		\tikzstyle{level 2}=[level distance=2cm, sibling distance=2cm]
		
		\tikzstyle{bag} = [text width=2em, text centered]
		\tikzstyle{end} = [circle, minimum width=3pt,fill, inner sep=0pt]
		
		\begin{tikzpicture}[grow=right]
		\node[bag] {8am}
		child {
			node[bag] {2pm}        
				child {	node[bag] {10am}    
					child{
						node[end, label=right:{$\sum\limits_{i\in \{1,2,3\} } u_{2,i}=-7$}] {}       		
							edge from parent         
							node[above] {}
							node[below]  {$1$}
						}
							edge from parent         
							node[above] {}
							node[below]  {$\frac{1}{2}$}
					}
						child {
							node[end, label=right:{$\sum\limits_{i\in \{1,3\} } u_{2,i}=-5$}] {}       		
							edge from parent         
							node[above] {}
							node[below]  {$\frac{1}{2}$}
						}					
			edge from parent         
			node[above] {}
			node[below]  {$\frac{1}{2}$}
		}
		child {
			node[end, label=right: {$\sum\limits_{i\in \{1\} } u_{2,i}=-1$}] {}
			edge from parent
			node[above] {}
			node[below]  {$\frac{1}{2}$}
		};
		\end{tikzpicture}	
}	
		\caption{Calculation of cost from \StudentAlice{} for all scenarios during \ABT{}U}\label{fig:inference}
	\end{figure}
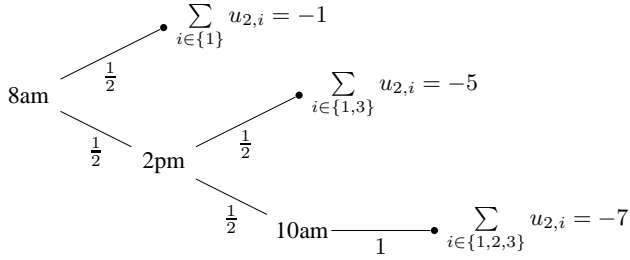

\paragraph{Theoretical Discussion}

The introduced UDisCSP framework can assume without significant loss
of generality that interagent constraints are public. This is due to
the fact that any problem with private interagent constraints (e.g.,
PKC), is equivalent with its dual representation where each constraint
becomes a variable~\cite{bacchus05}. However, note that privacy of
domains mentioned in~\cite{meseguer03} is not modeled by privacy of
constraints.

Moreover, the assumption that each agent owns a single variable is
also not restrictive.  Multiple variables in an agent can be
aggregated into a single variable by Cartesian product.

Nevertheless some algorithms can exploit these underlying structures
for efficiency, and this has been the subject of extensive 
research~\cite{fioretto2015multi}.

The UDisCSP mainly differs from DisCSP from the perspective of how solution is defined. It does not define solution as an agreement on a set of assignments but as a policy that could eventually reach such an agreement. As such, their comparison is not trivial, as one compares different aspects.
\begin{theorem}
	UDisCSPs planning and execution is more general than DisCSPs solving.
\end{theorem}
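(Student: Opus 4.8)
The plan is to read ``more general'' as the conjunction of two claims: (i)~every DisCSP-solving instance embeds into a UDisCSP planning-and-execution instance whose optimal policy, when executed, reproduces the DisCSP outcome; and (ii)~there are UDisCSP instances whose optimal behaviour has no counterpart in DisCSP solving, so the containment is strict. First I would fix an embedding $\Phi$ that sends a DisCSP $\langle A,V,D,C\rangle$ to the UDisCSP $\langle A,V,D,C,U,R\rangle$ obtained by setting every privacy cost $u_{i,j}=0$ and every reward $r_i$ to a fixed positive constant. The three preliminary reductions already argued in the Theoretical Discussion---public interagent constraints via the dual encoding, and single-variable agents via Cartesian product---let me assume $\Phi$ is applied to an arbitrary DisCSP without loss of generality.

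For the forward direction I would show that under $\Phi$ the utility of a state equals $r_i>0$ exactly when an agreement has been reached and $0$ otherwise, because the accumulated privacy loss is identically zero. Consequently the utility-maximizing policy always prefers to pursue an agreement, and the interrupt guard of Algorithm~\ref{alg:ABTUcav} (the test at Line~\ref{ln:abtu9}) never fires, since the estimated cost returned by Algorithm~\ref{alg:CalculateCost} is $0$, which lies strictly below $r_i$. Hence \ABTU{} executes exactly the moves of \ABT{}, and \SyncBTU{} those of \SyncBT{}. Invoking the completeness of the underlying solvers, the executed policy terminates with an agreement iff the DisCSP is satisfiable, any agreement it outputs is a satisfying assignment, and the ``no solution'' verdict is likewise preserved. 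Thus executing the optimal policy of $\Phi(P)$ solves $P$, so UDisCSP planning and execution subsumes DisCSP solving.

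For strictness I would reuse the running meeting-scheduling instance of Example~\ref{ex:1} together with the costs and rewards specified thereafter: there the optimal policy of \StudentAlice{} interrupts search after revealing a single value, since Algorithm~\ref{alg:CalculateCost} yields an expected continuation utility of $5-6=-1<0$, even though an agreement exists. No DisCSP solution can encode this outcome, as a DisCSP answer is confined to an agreement $\langle x_1,\dots,x_n\rangle$ or the verdict ``unsatisfiable,'' whereas the UDisCSP answer is a policy whose rational execution deliberately forgoes an existing agreement; since a policy determines an outcome by execution but an outcome does not determine a policy, $\Phi$ cannot be inverted, giving strictness. The main obstacle is neither reduction but making ``more general'' precise: the two frameworks define a solution along different axes---an outcome versus a state-to-action policy, exactly as the paragraph preceding the theorem warns---so the delicate step is to phrase the comparison as an embedding of \emph{problems} equipped with an outcome-preserving execution semantics, rather than as a naive comparison of the solution objects themselves.
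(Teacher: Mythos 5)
Your containment direction is exactly the paper's proof: model the DisCSP as a UDisCSP with all privacy costs $u_{i,j}=0$, observe that the resulting instance always pursues an agreement when one exists, and conclude that the UDisCSP goal coincides with the DisCSP goal. You flesh this out more carefully than the paper does --- checking that the guard at Line~\ref{ln:abtu9} can never fire because Algorithm~\ref{alg:CalculateCost} returns $0$ against a strictly positive reward, and appealing to the completeness of the underlying solvers --- but the reduction is the same one. Where you genuinely diverge is in the second half: the paper asserts strictness only implicitly, closing with the unsupported remark that ``this implies a tougher class of complexity,'' whereas you supply an explicit witness (the running example in which \StudentAlice{} rationally interrupts despite an agreement existing) and argue that no DisCSP answer object can encode that outcome. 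Your version buys an actual proof that the inclusion is proper, at the cost of having to pin down what ``more general'' means --- which you do sensibly, by comparing problems under an outcome-preserving execution semantics rather than comparing solution objects directly, in line with the caveat the paper itself raises just before the theorem. One small caution: your claim that the \emph{optimal} policy under $\Phi$ always pursues an agreement is really a claim about the specific policies induced by \ABTU{} and \SyncBTU{}; if you want the statement at the level of arbitrary optimal policies you would need to note that with zero costs every non-interrupting policy weakly dominates every interrupting one, which is immediate but worth saying.
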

\begin{proof}
	A DisCSP can be modeled as a UDisCSPs with all privacy costs equal
	0. The obtained UDisCSPs would always reach an agreement, if
	possible. Therefore the goal of a UDisCSP would also coincide with the
	goal of the modeled DisCSP. This implies a tougher class of complexity.
\end{proof}

The space complexity required by \ABTU{} and \SyncBTU{} in each agent
is identical with the one of \ABT{} and \SyncBT{}, since the only
additional structures are:
\begin{itemize}
	\item
	the privacy costs associated
	with its values (constituting a constant factor increase for domain
	storage).
	\item
	the variables \risk{}, $terminationCount$, $count$ and $r_i$, which require
	a constant space.
\end{itemize}

\section{Experimental Results}\label{Experimentations}
We evaluate our framework and algorithms on randomly generated
instances of {\em distributed meeting scheduling problems
	(DMS)}. Previous work~\cite{wallace2005constraint} in distributed
constraint satisfaction problems has already addressed the question of
privacy in distributed meeting scheduling by considering the
information on whether an agent can attend a meeting to be
private. They evaluate the privacy loss brought by an action as the
difference between the cardinalities of the final set and of the initial
set of possible availabilities for a participant.
As different distributions of unary constraints can have an important impact on privacy leak, 
we generate two different types of random problems:	

\begin{enumerate}
	\item Uniform: Where the unary constraints are uniformly distributed between agents.
	\item Tail-constrained: Where the $n/2$ highest priority agents have a
	$3$ times lower probability to receive a unary constraint as compared
	to the $n/2$ lowest priority agents, even though the global density 
	remains the same.
\end{enumerate}

\begin{example}
	Suppose a problem where the two lowest priority agents have disjuncts sets 
	of availabilities, meaning that these agents can detect alone that the problem
	has no solution. \ABTU{} lets these agents exchange messages from the beginning of the search process and therefore interrupts it quickly.
	However, \SyncBTU{} prevents them from exchanging messages before all 
	higher priority agents have constructed a partial solution. Then, \SyncBTU{} 
	requires more messages exchange and therefore more privacy leak 
	than does \ABTU{}.
\end{example}	

The algorithm we use to  generate the problem is:

\begin{enumerate}
	\item We create the variables (one per {\em participant agents}).
	\item We initialize their domain (possible {\em times}).
	\item We add the global constraint {\em "all equals"}.
	\item Unary constraint are added to variables, to fit the density.
	\item For each value of each variable, we generate a
	revelation cost uniformly distributed between 0 and 9.
\end{enumerate}

The experiments are carried out on a computer under Windows 7, using a
1 core 2.16GHz CPU and 4 GByte of RAM. In Figure~\ref{fig:Plot}, we
show the total amount of privacy lost by all agents, averaged over 50
problems, function of the density of unary constraints. The problems
are parametrized as follows: 10 agents, 10 possible values, the utility
of a revelation is a random number between 0 and 9, the reward for
finding a solution to the problem is 20. Each set of experiments is 
an average esimation of 50 instances for the different algorithms 
(i.e, \SyncBT{}, \ABT{}, \SyncBTU{}, \ABTU{}).

	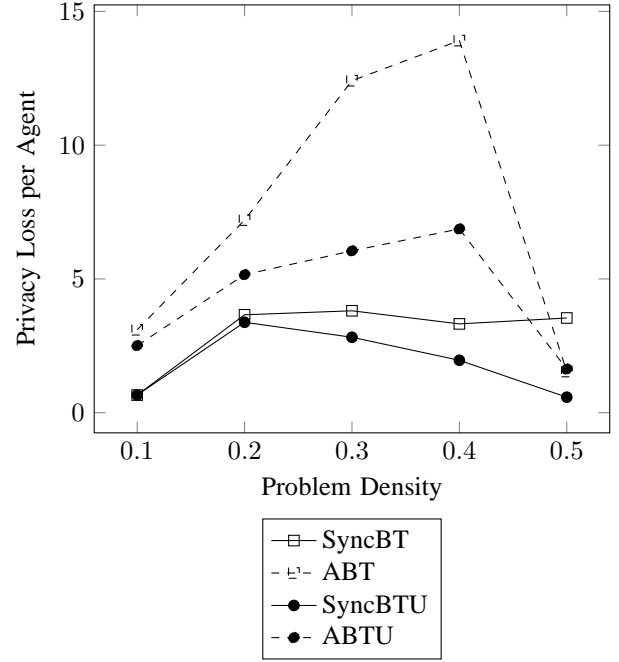
\begin{figure}[h]
		\centering
		\begin{tikzpicture}
		\begin{axis}[legend cell align=left,
		xlabel={Problem Density},
		ylabel={Privacy Loss per Agent}
		]
	
		\addplot [color=black, mark=square, solid] coordinates {
			(0.1,0.66) (0.2,3.66) (0.3,3.81)
			(0.4,3.32) (0.5,3.54) 
		};
		
		\addplot [color=black, mark=square, dashed]coordinates{
			(0.1,3.1) (0.2,7.2) (0.3,12.41)
			(0.4,13.91) (0.5,1.54) 
		};
		
		\addplot [color=black, mark= *, solid]coordinates{
			(0.1,0.66) (0.2,3.38) (0.3,2.82)
			(0.4,1.96) (0.5,0.58) 
		};
		
		\addplot [color=black, mark= *, dashed]coordinates{
			(0.1,2.51) (0.2,5.16) (0.3,6.05)
			(0.4,6.87) (0.5,1.63)
		};
		
		\pgfplotsset{every axis legend/.append style={
				at={(0.5,-0.2)},
				anchor=north},} 
		
		\legend{SyncBT, ABT, SyncBTU, ABTU}
		\end{axis}
		\end{tikzpicture}
		\caption{Evaluation of privacy loss 
			on different random problems}\label{fig:Plot}
	\end{figure}	
	\begin{figure}[h]
		\centering
		\begin{tikzpicture}
		\begin{axis}[legend cell align=left,
		xlabel={Problem Density},
		ylabel={Privacy Loss per Agent}
		]
	
		\addplot [color=black, mark= *, solid]coordinates {
			(0.1,0.69) (0.2,1.59) (0.3,2.19)
			(0.4,2.09) (0.5,1.64) 
		};
		
		\addplot [color=black, mark= *, dashed]coordinates{
			(0.1,2.96) (0.2,4.97) (0.3,6.47)
			(0.4,7.52) (0.5,8.61) 
		};
	
		\addplot [color=black, mark=square, solid]coordinates{
			(0.1,0.66) (0.2,3.38) (0.3,2.82)
			(0.4,1.96) (0.5,0.58) 
		};
		
		\addplot [color=black, mark=square, dashed]coordinates{
			(0.1,2.51) (0.2,5.16) (0.3,6.05)
			(0.4,6.87) (0.5,1.63)
		};
		
		\pgfplotsset{every axis legend/.append style={
				at={(0.5,-0.2)},
				anchor=north},} 
		
		\legend{Uniform - SyncBTU, Uniform - ABTU, Tail-Constrained - SyncBTU, Tail-Constrained - ABTU}
		\end{axis}
		\end{tikzpicture}
		\caption{Evaluation of privacy loss on instances with 
			different parameters.}\label{fig:Plot2}
	\end{figure}
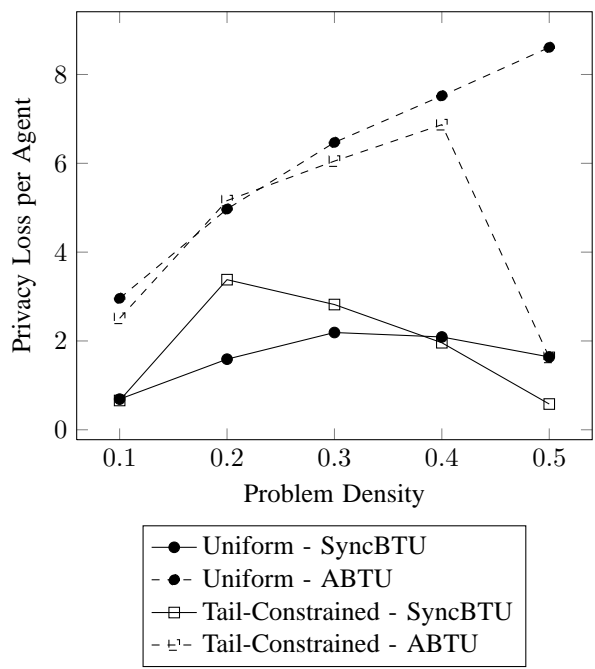	

	\begin{table}[]
		\caption{General comparison for algorithms along multiple metrics}\label{table}
		\centering
		\scalebox{1.0}{
		\begin{tabular}{llllllll}
			
            & SyncBT  & ABT      & SyncBTU & ABTU    \\
            PrivacyLoss & 2,5   & 9.0    & 1,8    & 5,3   \\
            Messages     & 2,8   & 531,3  & 2,3   & 150,6 \\
	        Solved       & 0,3    & 0,2     & 0,3    & 0,2    \\
            Interruption  & 0       & 0        & 0,3    & 0,7    \\
            cpuTime(ms)     & 257,7 & 1329,1 & 254,7 & 910,3
		\end{tabular}
	}
	\end{table}

\paragraph{Discussion on Experiments}
For each algorithm, we have measured in Table~\ref{table} the privacy 
loss, the number of messages exchanges, the number of problem solved, 
the number of solvings interrupted to preserved privacy and CPU time.
The Figure~\ref{fig:Plot} shows that synchronous algoriths are better than 
asynchronous at preserving privacy. Moreover, \SyncBTU{} and \ABTU{} are 
better than \SyncBT{} and \ABT{} at preserving privacy, respectively.
The Figure~\ref{fig:Plot2} shows the averaged estimation of privacy loss 
per agent, according to the density and the distribution of the 
unary constraints. We observe that for a density higher than $0.4$, 
privacy loss is lower for \textit{tail-constrained DMS} than it is for 
\textit{uniform DMS}, particularly with \ABTU{}.
We notice in Table~\ref{table} that interrupting the solvings to preserve privacy 
let agents not only reduce privacy loss by $39\%$ but also reduce the calculation 
times by $27\%$, reduce the number of messages exchanged by $71\%$ 
while still solving $98\%$ of the problems solved by standard algorithms, 
the interruptions happening mostly when the problems have no solution.

\section{Conclusions}\label{Conclusions}
While many frameworks have been developed recently for coping with
privacy in distributed problem solving, none of them is widely
used, likely due to the difficulty in modeling common problems. In this article we propose a framework called Utilitarian
Distributed Constraint Satisfaction Problem (UDisCSP). It models the
privacy loss for the revelation of an agent's constraints as a utility
function. We present algorithms that let agents estimate how much
privacy will be lost at the end of the solving process, using
information from previous experience in solving problems. This
estimation is used to modify the agent's behavior.
We then show how adapted synchronous and asynchronous
protocols (\SyncBTU{} and \ABTU{}) behave and compare them on different types of distributed meeting scheduling problems. The comparison shows that
\SyncBTU{} can maintain more privacy on random problems, as compared to both
\ABTU{} and original versions \ABT{} and \SyncBT{}. Some families
of problems with particular properties regarding privacy were also identified.

In future work, we want to investigate how much privacy
is leaked during the solving of different classes of problems. We also
plan to improve the way agents learn from previous experience, by
using not only the density of the corresponding problems, but also the
tightness, the number of variables or the number of interagent
constraints they are involved in.

\bibliographystyle{plain}	
\bibliography{wi}

\begin{thebibliography}{10}

\bibitem{bacchus05}
F.~Bacchus and P.~Van~Beek.
\newblock On the conversion between non-binary and binary constraint
  satisfaction problems.
\newblock In {\em Proceedings of the 15th National Conference on Artificial
  Intelligence}, Madison, Wisconsin, July 1998.

\bibitem{brito2009distributed}
Ismel Brito, Amnon Meisels, Pedro Meseguer, and Roie Zivan.
\newblock Distributed constraint satisfaction with partially known constraints.
\newblock {\em Constraints}, 14(2):199--234, 2009.

\bibitem{meseguer03}
Ismel Brito and Pedro Meseguer.
\newblock Distributed forward checking.
\newblock In {\em CP}, 2003.

\bibitem{doshi2008distributed}
Prashant Doshi, Toshihiro Matsui, Marius Silaghi, Makoto Yokoo, and Markus
  Zanker.
\newblock Distributed private constraint optimization.
\newblock In {\em Web Intelligence and Intelligent Agent Technology, 2008.
  WI-IAT'08. IEEE/WIC/ACM International Conference on}, volume~2, pages
  277--281. IEEE, 2008.

\bibitem{faltings2008privacy}
Boi Faltings, Thomas L{\'e}aut{\'e}, and Adrian Petcu.
\newblock Privacy guarantees through distributed constraint satisfaction.
\newblock In {\em Web Intelligence and Intelligent Agent Technology, 2008.
  WI-IAT'08. IEEE/WIC/ACM International Conference on}, volume~2, pages
  350--358. IEEE, 2008.

\bibitem{fioretto2015multi}
Ferdinando Fioretto, William Yeoh, and Enrico Pontelli.
\newblock Multi-variable agents decomposition for dcops to exploit multi-level
  parallelism.
\newblock In {\em Proceedings of the 2015 International Conference on
  Autonomous Agents and Multiagent Systems}, pages 1823--1824. International
  Foundation for Autonomous Agents and Multiagent Systems, 2015.

\bibitem{freuder2001privacy}
Eugene~C Freuder, Marius Minca, and Richard~J Wallace.
\newblock Privacy/efficiency tradeoffs in distributed meeting scheduling by
  constraint-based agents.
\newblock In {\em Proc. IJCAI DCR}, pages 63--72, 2001.

\bibitem{gershman2008scheduling}
A.~Gershman, A.~Grubshtein, A.~Meisels, L.~Rokach, and R.~Zivan.
\newblock Scheduling meetings by agents.
\newblock In {\em Proc. 7th International Conference on Practice and Theory of
  Automated Timetabling (PATAT 2008). Montreal (August 2008)}, 2008.

\bibitem{greenstadt2006analysis}
Rachel Greenstadt, Jonathan~P Pearce, and Milind Tambe.
\newblock Analysis of privacy loss in distributed constraint optimization.
\newblock In {\em AAAI}, volume~6, pages 647--653, 2006.

\bibitem{grinshpoun2014privacy}
Tal Grinshpoun and Tamir Tassa.
\newblock A privacy-preserving algorithm for distributed constraint
  optimization.
\newblock In {\em Proceedings of the 2014 international conference on
  Autonomous agents and multi-agent systems}, pages 909--916. International
  Foundation for Autonomous Agents and Multiagent Systems, 2014.

\bibitem{hirt2000efficient}
Martin Hirt, Ueli Maurer, and Bartosz Przydatek.
\newblock Efficient secure multi-party computation.
\newblock In {\em Advances in Cryptology-ASIACRYPT 2000}, pages 143--161.
  Springer, 2000.

\bibitem{international2005worldwide}
International Air Transport~Association IATA.
\newblock {\em Worldwide scheduling guidelines}.
\newblock International Air Transport Association, 2005.

\bibitem{koenig2002heuristic}
Sven Koenig, David Furcy, and Colin Bauer.
\newblock Heuristic search-based replanning.
\newblock In {\em AIPS}, pages 294--301, 2002.

\bibitem{maheswaran2006privacy}
Rajiv~T Maheswaran, Jonathan~P Pearce, Emma Bowring, Pradeep Varakantham, and
  Milind Tambe.
\newblock Privacy loss in distributed constraint reasoning: A quantitative
  framework for analysis and its applications.
\newblock {\em Autonomous Agents and Multi-Agent Systems}, 13(1):27--60, 2006.

\bibitem{maheswaran2005valuations}
Rajiv~T Maheswaran, Jonathan~P Pearce, Pradeep Varakantham, Emma Bowring, and
  Milind Tambe.
\newblock Valuations of possible states (vps): a quantitative framework for
  analysis of privacy loss among collaborative personal assistant agents.
\newblock In {\em Proceedings of the fourth international joint conference on
  Autonomous agents and multiagent systems}, pages 1030--1037. ACM, 2005.

\bibitem{maheswaran2004taking}
Rajiv~T Maheswaran, Milind Tambe, Emma Bowring, Jonathan~P Pearce, and Pradeep
  Varakantham.
\newblock Taking dcop to the real world: Efficient complete solutions for
  distributed multi-event scheduling.
\newblock In {\em Proceedings of the Third International Joint Conference on
  Autonomous Agents and Multiagent Systems-Volume 1}, pages 310--317. IEEE
  Computer Society, 2004.

\bibitem{russell2010}
Stuart Russell and Peter Norvig.
\newblock Artificial intelligence: a modern approach.
\newblock 2010.

\bibitem{silaghi2002comparison}
Marius-Calin Silaghi and Boi Faltings.
\newblock A comparison of distributed constraint satisfaction techniques with
  respect to privacy.
\newblock In {\em Proceedings of the Distributed Constraint Reasoning Workshop
  at AAMAS}. Citeseer, 2002.

\bibitem{tassa2015max}
Tamir Tassa, Roie Zivan, and Tal Grinshpoun.
\newblock Max-sum goes private.
\newblock In {\em Proceedings of the 24th International Conference on
  Artificial Intelligence}, pages 425--431. AAAI Press, 2015.

\bibitem{wallace2005constraint}
Richard~J Wallace and Eugene~C Freuder.
\newblock Constraint-based reasoning and privacy/efficiency tradeoffs in
  multi-agent problem solving.
\newblock {\em Artificial Intelligence}, 161(1):209--227, 2005.

\bibitem{freuder02}
R.J. Wallace and E.C. Freuder.
\newblock Constraint-based multi-agent meeting scheduling: Effects of agent
  heterogeneity on performance and privacy loss.
\newblock In {\em DCR}, pages 176--182, 2002.

\bibitem{yokoo1998distributed}
Makoto Yokoo, Edmund~H Durfee, Toru Ishida, and Kazuhiro Kuwabara.
\newblock The distributed constraint satisfaction problem: Formalization and
  algorithms.
\newblock {\em Knowledge and Data Engineering, IEEE Transactions on},
  10(5):673--685, 1998.

\bibitem{yokoo1992distributed}
Makoto Yokoo, Toru Ishida, Edmund~H Durfee, and Kazuhiro Kuwabara.
\newblock Distributed constraint satisfaction for formalizing distributed
  problem solving.
\newblock In {\em Distributed Computing Systems, 1992., Proceedings of the 12th
  International Conference on}, pages 614--621. IEEE, 1992.

\bibitem{yokoo2002secure}
Makoto Yokoo, Koutarou Suzuki, and Katsutoshi Hirayama.
\newblock Secure distributed constraint satisfaction: Reaching agreement
  without revealing private information.
\newblock In {\em Principles and Practice of Constraint Programming-CP 2002},
  pages 387--401. Springer, 2002.

\bibitem{zivan2003synchronous}
Roie Zivan and Amnon Meisels.
\newblock Synchronous vs asynchronous search on discsps.
\newblock In {\em Proceedings of the First European Workshop on Multi-Agent
  Systems (EUMA)}, 2003.

\end{thebibliography}

\end{document}